\newtheorem{example}{Example}
\newtheorem{theorem}{Theorem}
\title{Conflict-Based Search for Connected Multi-Agent Path Finding}
 \author{
 	Arthur Queffelec$^1$
 	\and
 	Ocan Sankur$^2$\and
 	Fran\c{c}ois Schwarzentruber$^1$
 	\\
 	$^1$Univ Rennes, CNRS, IRISA\\
 	$^2$Univ Rennes, CNRS, INRIA, IRISA\\
 	firstname.lastname@inria.fr
 }
\newcommand{\algorithmfont}{\small}
\newtheorem{definition}{Definition}
\algrenewcommand\algorithmicindent{0.6em}%
\newcommand{\coms}{\xspace\scalebox{0.5}{\begin{tikzpicture}\draw[dash pattern=on 3pt off 3pt, line width = 1mm ] (0,0) -- (0.8,0); \node at (0,-0.08) {}; \end{tikzpicture}}\xspace}
\renewcommand{\gets}{:=}
\newcommand{\set}[1]{\{#1\}}
\newcommand\sTopoGraph{G\xspace}
\newcommand\sVertexSet{V\xspace}
\newcommand\sBase{B\xspace}
\newcommand\sMoveRel{E_m}
\newcommand\sCommRel{E_c}
\newcommand\insCommRel[2]{(#1, #2) \in \sCommRel}
\newcommand\sConstTree{CT}
\newcommand{\sConstraint}[4]{\langle #1, #2, #3, #4\rangle}
\newcommand{\sConflict}[2]{(#1, #2)}
\newcommand\vExecLen{\ell}
\newcommand\vConfig{c}
\newcommand\vConfigS{s}
\newcommand\vConfigT{g}
\newcommand\vAgent{a}
\newcommand\vAgentb{b}
\newcommand{\vAgentsNumber}{k}
\newcommand\vVertex{v}
\newcommand\vTime{t}
\newcommand\vBool{\beta}
\newcommand\vConst{\textrm{constraints}}
\newcommand{\conjunctionconstraints}[2]{#1 \cup \{#2\}}
\newcommand\vPath{\pi}
\newcommand\vExec{\textrm{exec}}
\newcommand\vNode{\mathsf{n}}
\newcommand\vNodeChild{\mathsf{n}'}
\newcommand\vRoot{\textsf{root}}
\newcommand\vCost{\textrm{cost}}
\newcommand\aInsertRoot{\Call{insertRoot}{}}
\newcommand\aCreateChild{\Call{createChild}{}}
\newcommand{\aCSPname}{\Call{CSP}{}\xspace}
\newcommand\aCSP[3]{\Call{CSP}{#1, #2, #3}}
\newcommand\aSelf[3]{\Call{self}{#1, #2, #3}}
\newcommand\aOther[3]{\Call{other}{#1, #2, #3}}
\newcommand\aNeg[2]{\Call{neg}{#1, #2}}
\newcommand{\aOPEN}{\textsf{OPEN}\xspace}
\newcommand{\aCHILDREN}{\textsf{CHILDREN}\xspace}
\newcommand{\aBYPASS}{\textcolor{orange!50!black}{BYPASS}\xspace}
\newcommand{\aLine}[1][.2pt]{\par\vskip.5\baselineskip\hrule height #1\par\vskip.5\baselineskip}
\newcommand\Raise{\textbf{raise}\xspace}
\newcommand\Discard{\textbf{discard}\xspace}
\newcolumntype{?}{!{\vrule width 1pt}}
\newcolumntype{:}{!{\vrule width 2pt}}
\tikzstyle{communication} = [dash pattern=on 1pt off 1pt, color=blue!50!white] 
\newcommand\Self{\textsc{Self}\xspace}
\newcommand\Other{\textsc{Other}\xspace}
\newcommand\Neg{\textsc{Neg}\xspace}
\newcommand\iSelf{\textsc{s}\xspace}
\newcommand\iOther{\textsc{o}\xspace}
\newcommand\iNeg{\textsc{n}\xspace}
\newcommand\CCBS{\normalfont\textrm{CCBS}\xspace}
\newcommand\CCBSSO{\CCBS_{\iSelf\iOther}\xspace}
\newcommand\CCBSN{\CCBS_{\iNeg}\xspace}
\newcommand\CCBSNSO{\CCBS\xspace}
\newcommand\CCBSS{\CCBS_{\iSelf}\xspace}
\newcommand\CCBSNSObold{\textbf{CCBS}\xspace}
\newcommand\CCBSSObold{\textbf{CCBS}_\textbf{SO}\xspace}
\newcommand\CCBSNbold{\textbf{CCBS}_\textbf{N}\xspace}
\begin{document}
\maketitle
\begin{abstract}
    We study a variant of the multi-agent path finding problem (MAPF) in which
    agents are required to remain connected to each other and to a designated
    base. This problem has applications in search and rescue missions where the
    entire execution must be monitored by a human operator. We re-visit the
    conflict-based search algorithm known for MAPF, and define a variant where
    conflicts arise from disconnections rather than collisions. We study
    optimizations, and give experimental results in which we compare our
    algorithms with the literature.
\end{abstract}

\section{Introduction}
\label{sec:intro}



In \emph{information-gathering}
missions, a group of robots are used to retrieve data at particular locations of an area (e.g. farm, building, etc).
An application is \emph{search \& rescue} missions which are often assisted by human operators,
and can be realized by unmanned aerial vehicles or other types of robots.
In some of these applications the robots must continuously remain connected, for instance,
in order to ensure a real-time video stream and to allow human operators to make quick decisions~\cite{abb-is17}.


In this paper, we consider the problem of
computing paths for a set of agents in which they remain connected at all
steps. We call this problem the \emph{Connected Multi-Agent Path Finding}
problem (CMAPF).
CMAPF was initially introduced and studied in~\cite{Hollinger:12:MCPC}. CMAPF
does not consider collisions. As advocated by Hollinger et al., a discretization
can take into account the geometry of the agents such that collisions can be
avoided by an on-board collision avoidance system. Also, with a small amount of
agents, collisions can be ignored by letting agents operate at different
altitudes (e.g. in drone applications). Thus, we suppose that several agents are
allowed to share the same position at a given time as done in
\cite{Hollinger:12:MCPC,Tateo:18:MCPP}.

Hollinger and Singh showed the NP-hardness of the problem given a bound on the
length of the execution, and provided an online algorithm. Tateo et al.,
in~\cite{Tateo:18:MCPP}, showed the PSPACE-completeness of the general decision
problem and gave two suboptimal sampling-based algorithms and an optimal
 DFS-based algorithm. In \cite{Charrier:19:RCPCA}, Charrier et al. show
the problem is in LOGSPACE when restricted to so-called sight-moveable graphs,
but the bounded version of the problem remains NP-hard.

 One may solve our problem offline with A*~\cite{Hart:68:ASTAR}. However, the
 group of agents is seen as one, and the algorithm is then exponential in the
 number of agents and the number of their available moves. That is an approach
 similar to the DFS-based algorithm of \cite{Tateo:18:MCPP}, which we
 compare to our algorithm.

In our work, we propose to take the approach of Conflict-Based Search (CBS)
algorithm introduced in~\cite{Sharon:12:CBS}. CBS solves another problem called
the Multi-Agent Path Finding (MAPF) problem, that focuses on
\emph{collision-free} paths. CBS starts by computing optimal paths for each
agent separately; when a collision between two agents is detected at a location,
the algorithm constrains one of the agents away from this location. CBS is,
intuitively, exponential in the number of conflicts.
 In general, CBS has significantly better performance in practice
than A*.

Our contribution is a complete and optimal algorithm called
\emph{Connectivity-Conflict-Based Search (CCBS)}. CCBS relies on ideas similar to
CBS (including the \emph{bypass} optimization \cite{Boyarski:15:CBSBP}),
but manipulates \emph{connectivity constraints} instead of collision constraints.

CMAPF is more challenging than MAPF. In MAPF, a collision conflict concerns only
 a pair of agents. That is why CBS solves directly a collision conflict in one step. In
 our case, a connectivity conflict concerns an arbitrary subset of agents. As
 solving a connectivity conflict is more demanding, we opt for strategies
 that do not solve directly a connectivity constraint, but guide the search
 towards a connected configuration.

 More precisely we consider three intuitive strategies when facing a disconnected configuration:
 
 \begin{itemize}
	\item \Neg: constrain one of the agent $\vAgent$ to \emph{not be} in its current location;
	\item \Self: constrain the disconnected agent $\vAgent$ to \emph{be} connected to
	some other agent or to the base;
	\item \Other: constrain another agent $\vAgentb$ to \emph{be} connected to the
	disconnected agent $\vAgent$.
\end{itemize}
 
As in CBS, strategy \Neg creates \emph{negative constraints}. 
As in done in \cite{Li:19:DS}, strategies
\Self and \Other create \emph{positive constraints}.

In our experiments, we consider the (optimal) variant $\CCBSN$ of $\CCBS$ that
applies only \Neg, and the (incomplete) variant $\CCBSSO$ that  applies only
\Self and \Other.  We compared our algorithm to A* with operator decomposition.,
Surprisingly, our experiments show that:
 \begin{itemize}
	\item $\CCBS$ clearly outperforms $\CCBSN$, which outperforms~A*;
	\item $\CCBSSO$, although incomplete, has the similar behavior as $\CCBS$ and outputs optimal plans in almost all cases.
 \end{itemize}
	
Finally, we discuss an optimization (called partial and selected splitting) that
saves $9\%$ of the memory consumption on average.

\paragraph{Outline}
We give the definition of the CMAPF problem, in Section~\ref{sec:preli}. Then,
in Section~\ref{sec:cbs}, we recall the CBS algorithm. We present our algorithm
CCBS, in Section~\ref{sec:algo}. In Section~\ref{sec:subopt}, we discuss completeness and optimality. Finally, in
Section~\ref{sec:exp}, we show our experimental results, and finish with
discussions, in Section~\ref{sec:concl}.

\section{Connected Multi-Agent Path Finding}
\label{sec:preli}

In this section, we formalize CMAPF \cite{Hollinger:12:MCPC,Tateo:18:MCPP,Charrier:19:RCPCA}.
%
%
The input is a
\emph{topological graph} specifying how agents can move (via movement edges) and
how they can communicate (via communication edges) with each other.

\label{sub:preli:graph}
\begin{definition}[Topological Graph] A \emph{topological graph} is a tuple
    $\sTopoGraph = \langle \sVertexSet, \sMoveRel, \sCommRel \rangle$, with
    $\sVertexSet$ a finite set of vertices containing a distinguished element
    $\sBase$ called the \emph{base}, $\sMoveRel~\subseteq \sVertexSet\times
    \sVertexSet$ a set of undirected movement edges and $\sCommRel \subseteq
    \sVertexSet \times \sVertexSet$ a set of undirected communication edges.
\end{definition}

In this work, we restrict to graphs in which all vertices contain a movement
self-loop. This means that agents can always \emph{idle} at any vertex.
Figure~\ref{figure:example-cmapf} gives an example of a topological graph with 7
vertices.

\label{sub:preli:exec}

\begin{definition}[Execution] \label{def:execution}
	An \emph{execution} $\vExec$ of length $\vExecLen$ with $\vAgentsNumber$
	agents in a topological graph $\sTopoGraph$ is a collection of  $\vAgentsNumber$ paths of
	length $\vExecLen$, one for each agent.
\end{definition}

As we are interested in the makespan (that is the maximum of the lengths), in
Definition~\ref{def:execution}, we suppose w.l.o.g. that paths are of the same
lengths; if not, simply consider that agents can idle at their destinations.
Agents are numbered from~$1$ to~$\vAgentsNumber$. Given an execution $\vExec$,
$\vExec_\vAgent$ is the path of agent $\vAgent$. We denote by
$\vExec_\vAgent[t]$, the vertex occupied by agent~$\vAgent$ at the $t$-th step.
We denote by $\vExec[\vTime]$, the positions of all the agents at time $t$, i.e.
$\vExec[\vTime]$ is the vector $(\vExec_1[\vTime], \dots,
\vExec_\vAgentsNumber[\vTime]) \in \sVertexSet^\vAgentsNumber$. Such a vector
$\vConfig \in \sVertexSet^\vAgentsNumber$ is called a \emph{configuration}.

\begin{definition}
	A configuration $\vConfig\in \sVertexSet^\vAgentsNumber$ is \emph{connected}
	if the vertices $\sBase, \vConfig_1, \dots, \vConfig_\vAgentsNumber$ form a
	connected subgraph w.r.t communication edges ($\sCommRel$). Otherwise, we
	say that $\vConfig$ is disconnected. An execution $\vExec$ of
	length~$\vExecLen$ is said to be connected if $\vExec[\vTime]$ is connected,
	for all  $1\leq \vTime\leq \vExecLen$.
	\label{definition:connected}
\end{definition}

Definition~\ref{definition:connected} captures agents that are connected to the
base via multi-hop (that is, an agent $\vAgent$ is connected to the base if
there is a sequence of agents connecting $\vAgent$ to the base). An example of
an execution using multi-hop connection is depicted in Example~\ref{ex:sm-exec}.

We define the following optimization problem called \emph{connected multi-agent
path finding problem} (CMAPF), in which we require the group of agents to be
connected via communication edges with the base during the entire execution. We
minimize the makespan of the execution.

\begin{definition}[CMAPF] Given a topological graph $\sTopoGraph = \langle
	\sVertexSet, \sMoveRel, \sCommRel \rangle$, number $\vAgentsNumber$ of
	agents, two configurations $\vConfigS, \vConfigT \in V^\vAgentsNumber$, find
	a connected execution $\vExec$ of minimum length $\vExecLen$
	with~$\vAgentsNumber$ agents in $\sTopoGraph$ such that $\vExec[1] =
	\vConfigS$ and $\vExec[\vExecLen] = \vConfigT$.
	\label{def:pb:reach}
\end{definition}

\begin{example}
	Consider the topological graph of Figure~\ref{figure:example-cmapf}.
	Consider the instance of CMAPF with $\vAgentsNumber=2$ agents with starting
	configuration~$s=(v_1,v_4)$ and goal configuration~$g=(v_3,v_6)$. The
	execution $\set{(v_1, v_2, v_3), (v_4,v_5, v_6)}$ is not connected. Indeed,
	at the second step the configuration is $(v_2,v_5)$: the first agent is
	disconnected, that is the set of vertices $\{v_2,v_5,B\}$ do not form a
	connected graph with the communication relation. However, the execution
	$\set{(v_1, v_2, v_3, v_3), (v_4, v_4, v_5, v_6)}$ is connected: the sets
	$\set{v_1, v_4, B}$, $\set{v_2, v_4, B}$, $\set{v_3, v_5, B}$ and $\set{v_3,
	v_6, B}$ all form connected graphs with the communication relation.
	Actually, that execution is an optimal solution of this CMAPF instance.
	\label{ex:sm-exec}
\end{example}

\begin{figure}
	\begin{center}
		\newcommand{\communicationsight}[2]{\draw[communication] (#1) edge[bend left=20] (#2);}
		\newcommand{\commLine}[2]{\draw[communication] (#1) edge (#2);}
		\begin{tikzpicture}[node distance=1cm,inner sep=1pt,minimum size=1pt]
		\node (1) {$v_1$};
		\node[right of=1] (2) {$v_2$};
		\node[right of=2] (3) {$v_3$};
		\node[below of=1] (4) {$v_4$};
		\node[right of=4]  (5) {$v_5$};
		\node[right of=5] (6) {$v_6$};
		\node[below of=5] (B) {$B$};

		\draw[-] (1) -- (2) -- (3) (1) -- (4) -- (5) -- (6);
		\commLine B 4
		\commLine B 5
		\commLine B 6
		\communicationsight 4 1
		\commLine 4 2
		\commLine 6 3
		\commLine 5 3
		\end{tikzpicture}
	\end{center}
  \caption{An example of a topological graph where plain edges are movement edges,
  and dotted ones are communication edges.
  \label{figure:example-cmapf}}
\end{figure}


\section{Conflict-Based Search}
\label{sec:cbs}
We recall the conflict-based search (CBS) algorithm \cite{Sharon:12:CBS} that solves the MAPF problem under collision constraints
but without connectivity.
CBS is composed of two levels: the \emph{high-level}
builds a \emph{constraint tree} while the \emph{low-level} finds
optimal single-agent paths.




The \emph{constraint tree} $\sConstTree$ is
composed of nodes $\vNode$ with the following attributes:

\begin{itemize}
\item $\vNode.\vConst$ - A finite set of constraints;
\item $\vNode.\vExec$ - An execution as defined in Definition~\ref{def:execution};
\item $\vNode.\vCost$ - The current cost of the execution (i.e. its length).
\end{itemize}
Initially, the root node contains no constraints, and its execution consists of a set of shortest paths computed independently for each agent. The cost is the 
maximum of the lengths of these paths.
Here, we 
extend all paths to have the same length by letting agents idle at their goal vertices.

A \emph{conflict} of an execution is a time point where a pair of agents are in
collision. If a constraint tree node without conflicts is created, then the algorithm returns
the execution. Otherwise, it chooses a conflict; say
agents~$a,a'$ collide at time~$t$. The algorithm creates two successor nodes
obtained by adding the negative constraints requiring, respectively, that
agent~$a$ must not be at~$\vExec_a[t]$ at time~$t$, and agent~$a'$ must not be
at $\vExec_{a'}[t]$ at time~$t$. Since all solutions must satisfy one of these
constraints, if the optimal solution is compatible with the current node, it
must be compatible with one of these successors.

For each successor node, one updates the shortest path for the agent with the
new constraint. This computation consists in the \emph{lower level} of CBS, and
is typically done using time-space A*~\cite{Silver:05:CP}.






\paragraph{Bypassing Conflict}

The bypassing conflict optimization \cite{Boyarski:15:CBSBP} works as follows. If
the execution~$\vNodeChild.\vExec$ in a successor $\vNodeChild$ of $\vNode$ has
the same cost but has fewer conflicts than the execution~$\vNode.\vExec$ of
$\vNode$, then one replaces~$\vNode.\vExec$ by~$\vNodeChild.\vExec$ and deletes
all children of $\vNode$.


\section{Connectivity-Conflict-Based Search}
\label{sec:algo}

In this section, we describe our algorithm called Connectivity-Conflict-Based
Search (CCBS) for the CMAPF problem. Our algorithm is adapted from conflict-based
search. 

\subsection{Positive and Negative Constraints}

Connectivity constraints are more demanding than collision constraints
found in CBS. Connectivity is a property involving all
agents, rather than being a local property involving only a pair of agents.

That is why, we do not try to solve a connectivity constraint in one single
step. Instead, we help the search to reach connected configurations. It seems
natural that solving disconnection requires to enforce some agent to be at a
connected location. In addition to \emph{negative} constraints,
we also use \emph{positive} ones that \emph{require} an agent to be at a
position, typically connected to another agent.

Thus, our constraints are of the form $\sConstraint \vAgent \vVertex \vTime
\vBool$, where $\vAgent$ is the constrained agent, $\vVertex$ the vertex on
which it is constrained, $\vTime$ the time-step at which the constraint applies
and $\vBool\in\{\top,\bot\}$ is a Boolean value that specifies whether the
constraint is positive or negative. The constraint $\sConstraint \vAgent
\vVertex \vTime \top$ (resp. $\sConstraint \vAgent \vVertex \vTime \bot $) means
that the agent $\vAgent$ should be (resp. should not be) at vertex $\vVertex$ at
time~$\vTime$.

Positive constraints were recently used in CBS~\cite{Li:19:DS} as an
optimization to create disjoint splits of a node. We here apply them in a
different context, but we use their low-level algorithm to compute
constrained shortest paths.

\subsection{The Low-Level: Constrained Shortest Paths}
\label{sub:algo:csp}
We use the algorithm described in \cite{Li:19:DS} to compute the constrained
shortest paths for individual agents.

Given a set of positive and negative constraints, a start and a goal vertex,
we use the positive constraints as timely ordered \emph{landmarks}. We compute
the path from the start location to the first landmark, then from the first
landmark to the second and so on up to the goal. This is done iteratively with
the original low-level of CBS using the negative
constraints~\cite{Sharon:12:CBS}. No time bound is put on the path to the goal, while
one is used on landmarks, which correspond to the time given
in the positive constraint.
 Remark that if a landmark cannot be reached in the
given time then there is no path satisfying the constraints.

\subsection{The High-Level: The Conflict Tree}
\label{sub:algo:ct}
In the original CBS, a conflict involves two agents, thus CBS creates two
successor nodes to solve it. In our case, a \emph{conflict} in an execution is a
disconnected configuration; thus, conflicts involve \emph{all} agents. So we
create a larger number of successors reflecting the many ways to solve the
conflict.

Let us explain the strategies used to generate constraints to handle
connectivity. First, as in CBS, we add negative constraints but for all agents.
More precisely, for a given disconnected configuration~$c$ at time~$t$, we create
a successor node for each agent~$\vAgent$, with the negative constraint that
forbids agent~$\vAgent$ to be at $c_{\vAgent}$ at time~$t$.

This strategy,
called \Neg, makes the search complete but, as we will see, it is
inefficient alone to solve CCBS.
Indeed, when CBS encounters a collision between two agents, in MAPF, the usage of
a negative constraint is enough to solve this collision. However, the
use of a negative constraints, in CMAPF, does not guarantee that the
disconnection is solved immediately. The idea is rather to guide the search towards
connected configurations by letting agents stay together.

To obtain an efficient algorithm, we will add positive constraints, according to
two strategies called $\Self$ and $\Other$. For a given conflict, we consider a disconnected
agent $\vAgent$. The strategy \Self
constrains agent $\vAgent$ to be at a position connected to some
other agent or to the base. The strategy \Other forces an arbitrary agent to a
position where it becomes connected to agent $\vAgent$. The \Self
and \Other strategies can be seen as shortcuts to negative constraints.

\begin{algorithm}
	\algorithmfont
	\begin{algorithmic}[1]
        \caption{High-Level of CCBS} \label{alg:CBS}
		\Require A topological graph $\sTopoGraph=\langle\sVertexSet, \sMoveRel,
		\sCommRel\rangle$, an initial configuration $\vConfigS$ and a goal
		configuration $\vConfigT$ (considered as global variables)
		\aLine
        \State \aInsertRoot
        \While{\aOPEN is not empty}
            \State $\vNode \gets$ best node from \aOPEN
            \If{$\vNode$ has no conflict}
                \State \Return $\vNode.\vExec$
            \EndIf

			\State CHILDREN := empty list

			 \State {$\sConflict \vTime \vAgent \gets$ \raisebox{-1.5mm}{\begin{minipage}{6cm}a time-step and a disconnected agent in a 
				conflict in $\vNode.\vExec$\end{minipage}}}

            \State \label{alg:line:self} ${
                \aSelf{\vNode}{\vTime}{\vAgent}
            }$

            \State \label{alg:line:other} ${
                \aOther{\vNode}{\vTime}{\vAgent}
            }$

            \State ${
            	\aNeg{\vNode}{\vTime}
            }$\label{alg:line:neg}

            \If{\aBYPASS was raised}
            	\State \Discard \aCHILDREN

            \Else
            \State 	insert all nodes in \aCHILDREN into \aOPEN

            \EndIf
			\label{alg:line:bend}
        \EndWhile
	\end{algorithmic}
\end{algorithm}

The overall algorithm is given in Algorithm~\ref{alg:CBS}, which shows how the
constraint tree is created. It maintains a priority queue called \aOPEN which
stores the set of leaf nodes that have not been expanded yet. It runs as long as
a solution has not been found and this queue is non-empty. At each iteration, a
best node is picked (Line~3). If that node has no conflict, it means that a
solution is found (Line~5). If the execution of the node contains conflicts,
then a conflict is chosen (arbitrarily) and child nodes are created with the
strategies (from Line~8 to Line~10). The last part of the algorithm shows the
bypass optimization.

The \aInsertRoot\ procedure in Algorithm~\ref{alg:procedures} shows the
initialization step where the root of the constraint tree is created. Here,
\aCSPname refers to the Constrained Shortest Path computation of the low-level
described in Subsection~\ref{sub:algo:csp}. This call returns a shortest path
satisfying the given set of constraints between the starting node
$\vConfigS_\vAgent$ and the target node $\vConfigT_\vAgent$ of agent $\vAgent$.
The procedures \Self, \Other and \Neg describe the creation of conflicts of
respective types. Furthermore, each one of these procedures call \aCreateChild,
which is responsible for creating child nodes and detecting the BYPASS condition
(for which the bypass optimization is applied).

\begin{algorithm}[ht]
	\algorithmfont
	\begin{algorithmic}[1]
        \caption{Sub-procedures}
		\Procedure{insertRoot}{}
		\State $\vRoot \gets$ new node
		\State $\vRoot.\vConst \gets \emptyset$

		\State \textbf{for }{all agents $\vAgent$} \textbf{do }
		$\vRoot.\vExec_\vAgent \gets
		\aCSP{\vConfigS_\vAgent}{\vConfigT_\vAgent}{\emptyset}$

		\State $\vRoot.\vCost \gets \max \set{|\vRoot.\vExec_\vAgent| \text{where
		$\vAgent$ is an agent}} $

		\State insert $\vRoot$ to \aOPEN
		\EndProcedure
		\aLine
		\Procedure{createChild}{node~$\vNode$, $\langle
		\vAgent, \vVertex, \vTime, \vBool\rangle$}
		\State $\vNodeChild \gets$ new node

		\State $\vNodeChild.\vConst \gets
		\conjunctionconstraints{\vNode.\vConst}{\langle \vAgent, \vVertex,
		\vTime, \vBool\rangle}$

		\State \textbf{for }{all agents $\vAgentb$} \textbf{do }
		$\vNodeChild.\vExec_\vAgentb \gets \vNode.\vExec_\vAgentb$

		\State $\vNodeChild.\vExec_\vAgent \gets
		\aCSP{\vConfigS_\vAgent}{\vConfigT_\vAgent}{\vNodeChild.\vConst}$

		\State $\vNodeChild.\vCost \gets \max \set{|\vNodeChild.\vExec_\vAgent|,
		\text{$\vAgent$ an agent}} $

		\If{$\vNodeChild.\vCost = \vNode.cost$ and $\vNodeChild$ has less
		conflicts than $\vNode$}

		\State $\vNode.\vExec_\vAgent \gets \vNodeChild.\vExec_\vAgent$

		\State \Raise \aBYPASS
		\EndIf

		\State insert $\vNodeChild$ to \aCHILDREN
		\EndProcedure
		\aLine

		\Procedure{self}{node $\vNode$, time $t$, agent $\vAgent$}
		\ForAll{agents $\vAgentb$ different from $\vAgent$} \ForAll{vertices
		$\vVertex''$ s.t.
		$\insCommRel{\vNode.\vExec_{\vAgentb}[\vTime]}{\vVertex''}$}
		\If{$\vVertex'' \neq \vExec_{\vAgent}[\vTime]$} \State
		$\aCreateChild(\vNode, \sConstraint\vAgent{\vVertex''}\vTime\top)$
		\EndIf \EndFor \EndFor \EndProcedure

		\ForAll{vertices $\vVertex''$ s.t. $ \insCommRel {\sBase}{\vVertex''}$}
		\If{$\vVertex'' \neq \vExec_{\vAgent}[\vTime]$}
		\State $\aCreateChild(\vNode, \sConstraint\vAgent{\vVertex''}\vTime\top)$
		\EndIf\EndFor

		\aLine

		\Procedure{other}{node $\vNode$, time $t$, agent $\vAgent$}
        \ForAll{agents $\vAgentb$ different from $\vAgent$} \ForAll{vertices
        $\vVertex'$ s.t. $\insCommRel{\vNode.\vExec_\vAgent[\vTime]}{\vVertex'}$}
        \If{$\vVertex' \neq \vExec_{\vAgentb}[\vTime]$} \State
        $\aCreateChild(\vNode, \sConstraint{\vAgentb}{\vVertex'}\vTime\top)$
        \EndIf \EndFor \EndFor \EndProcedure \aLine \Procedure{neg}{node
        $\vNode$, time $t$, agent $\vAgent$} \ForAll{agents $\vAgent$} \State
        $\aCreateChild(\vNode,
        \sConstraint{\vAgent}{\vNode.\vExec_\vAgent[\vTime]}\vTime\bot)$ \EndFor
        \EndProcedure
		\label{alg:procedures}
	\end{algorithmic}
\end{algorithm}

While the strategy \Neg creates~$\vAgentsNumber$ child nodes, the number of
child nodes created by \Self and \Other is rather quadratic, in
$O(\vAgentsNumber \times |\sVertexSet|)$  in each case, where $\vAgentsNumber$ is the number of
agents and $|\sVertexSet|$ is the number of vertices. Thus, the
constraint tree might grow too quickly. We will show that despite this high branching
factor, we do obtain satisfactory results in our benchmarks.
See also discussion in Section~\ref{sec:exp}.



\section{Discussion on Completeness and Optimality}
\label{sec:subopt}
\label{sec:subalgo}
In this subsection, we consider the variant of $\CCBS$ called $\CCBSN$ in which only strategy $\Neg$ is applied, but not 
$\Self$ and $\Other$ (we remove lines 8 and 9 in Algorithm~\ref{alg:CBS}.
We also consider the variant $\CCBSSO$ in which only strategies $\Self$ and $\Other$ are applied, but not strategy $\Neg$ (line 10 is deleted).
As long as we apply the strategy $\Neg$, we obtain a complete and optimal
algorithm. This can be proved similarly as in~\cite{Sharon:12:CBS}:

\begin{theorem}
	Both $\CCBSN$ and $\CCBSNSO$ are complete and optimal.
\end{theorem}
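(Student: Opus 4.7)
The plan is to adapt the standard completeness/optimality argument for CBS~\cite{Sharon:12:CBS} to the connectivity setting, where the only thing that really changes is the branching rule at conflicts. The proof has two ingredients: a \emph{soundness of branching} lemma saying that every feasible execution compatible with a node remains compatible with at least one of its children, and the usual \emph{best-first monotonicity} argument on $\vNode.\vCost$.

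First I would prove the branching lemma for $\CCBSN$. Fix a node $\vNode$ whose chosen conflict is at time $\vTime$, and let $\vConfig = \vNode.\vExec[\vTime]$ be the disconnected configuration. Let $\vExec^\star$ be any \emph{connected} execution satisfying $\vNode.\vConst$ with $\vExec^\star[1] = \vConfigS$ and $\vExec^\star[\vExecLen] = \vConfigT$. Since $\vExec^\star[\vTime]$ is connected but $\vConfig$ is not, the two vectors must differ in at least one coordinate, so there exists an agent $\vAgent$ with $\vExec^\star_\vAgent[\vTime] \neq \vConfig_\vAgent$. Hence $\vExec^\star$ satisfies the negative constraint $\sConstraint{\vAgent}{\vConfig_\vAgent}{\vTime}{\bot}$ introduced in the $\Neg$ branch for agent $\vAgent$; in other words, $\vExec^\star$ is compatible with that child node. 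For $\CCBSNSO$ exactly the same statement holds because the $\Neg$ children are still generated; the additional $\Self$ and $\Other$ children only add extra (more restrictive) siblings and so never remove the $\Neg$ child covering $\vExec^\star$.

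Next I would use the fact that $\aCSPname$ returns a shortest path subject to its constraints~\cite{Li:19:DS,Sharon:12:CBS}, so adding a constraint can only weakly increase an agent's path length. Consequently, along any root-to-leaf path in the constraint tree, the value $\vNode.\vCost$ is monotonically non-decreasing. Combined with the best-first selection from \aOPEN on $\vNode.\vCost$ and the branching lemma, one obtains the standard invariant: whenever a node $\vNode$ is extracted from \aOPEN, $\vNode.\vCost$ is a lower bound on the makespan of every connected execution compatible with $\vNode.\vConst$, and in particular a lower bound on the optimum whenever $\vNode$ lies on the ``optimal branch'' guaranteed by the lemma. Iterating the lemma from the root, there is always some node in \aOPEN compatible with a fixed optimal solution $\vExec^\star$, so the algorithm cannot return before producing a conflict-free node of cost $\leq |\vExec^\star|$; conversely, since \aCSPname returns true shortest paths, it cannot return one of cost $< |\vExec^\star|$. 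Hence the first conflict-free node returned has makespan exactly equal to the optimum, giving both completeness and optimality. Termination follows from the fact that any optimal makespan is at most $|\sVertexSet|^\vAgentsNumber$, bounding the number of distinct time-indexed constraints that can usefully be added.

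The part requiring most care is checking that the \aBYPASS optimization preserves the invariants. When bypass fires at $\vNode$ with child $\vNodeChild$, we have $\vNodeChild.\vCost = \vNode.\vCost$ and $\vNodeChild$ has strictly fewer conflicts; we discard the siblings and replace $\vNode.\vExec$ by $\vNodeChild.\vExec$ without touching $\vNode.\vConst$. Thus the set of executions compatible with $\vNode$ is unchanged, so the branching lemma applied to the \emph{next} expansion of $\vNode$ still covers every $\vExec^\star$ compatible with $\vNode.\vConst$, and the cost lower bound is preserved since $\vNode.\vCost$ is unchanged. This is the only place where the argument departs noticeably from~\cite{Sharon:12:CBS}, and I expect it to be the main obstacle worth writing out in detail.
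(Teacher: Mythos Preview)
Your proposal is correct and follows exactly the approach the paper intends: the paper gives no explicit proof and simply writes ``This can be proved similarly as in~\cite{Sharon:12:CBS}'', so your adaptation of the CBS completeness/optimality argument---branching soundness via \Neg, cost monotonicity of \aCSPname, and the best-first invariant---is precisely what is being invoked, only spelled out in more detail than the paper itself provides. Your treatment of \aBYPASS is a useful addition that the paper omits entirely.
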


\subsection{Incompleteness of $\CCBSSO$}

However, strategies $\Self$ and $\Other$ alone lead to an incomplete algorithm.
%
 We will mainly study the variant $\CCBSSO$ obtained by omitting the strategy
 $\Neg$. 
 We observe that the remaining positive constraints guide the search very
 quickly towards a solution in our experiments.

 We distinguish a class of topological graphs, called \emph{sight-moveable
 graphs}. In fact, in a typical radius discretization, if the agents restrict
 their communication to links that do not cross obstacles then the obtained
 topological graph is sight-moveable. While being a strong restriction, such
 communication may guarantee a more reliable connection. Furthermore, the agents
 can still exploit links crossing obstacles while not being required during the
 execution. The formal definition is given below.

 Our original motivation was to develop an efficient algorithm for this class. In
 fact, the theoretical complexity of deciding the existence of a connected plan
 was shown to be in LOGSPACE for sight-moveable graphs \cite{Charrier:19:RCPCA},
 while it is PSPACE-complete for general graphs~\cite{Tateo:18:MCPP}. We will
 first introduce this class and then formally study the properties of $\CCBSSO$.

 \paragraph{Sight-Moveable Graphs}

 We recall the class of \emph{sight-moveable} topological graphs introduced
 in~\cite{Charrier:19:RCPCA}. Whenever an agent can communicate with another
 node, then it can also move to that node while maintaining the communication. A
 sight-moveable graph can be obtained on a discretized graph by restricting
 communication in line of sight. 

 Formally, a \emph{sight-moveable topological graph}  has an undirected movement
 relation such that for all vertices $\vVertex, \vVertex' \in \sVertexSet$, for
 all $\vVertex \sCommRel \vVertex'$, there is a sequence of vertices
 $\vPath=\langle \vPath_0, \vPath_1, \dots, \vPath_\ell\rangle$ of size $\ell$ such
 that $\vPath_0 = \vVertex$, $\vPath_\ell = \vVertex'$  $\vPath_i \sMoveRel \vPath_{i+1}$ and $\vPath_i \sCommRel v'$, for all $0 \leq i < \ell$.

 \begin{example}
   Figure~\ref{figure:sight-moveabletopologicalgraph} shows a sight-moveable
   graph. For instance, $B \coms q_5$ and, indeed, there is a path $B \sMoveRel
   q_4 \sMoveRel q_5$ with $B \coms q_4$. In contrast, the graph in
   Figure~\ref{figure:example-cmapf} is not sight-moveable, since $v_4\coms v_2$
   but it is not possible to go from~$v_4$ to~$v_2$ while maintaining the
   communication with~$v_2$.
 \end{example}

 \begin{figure}
 	\begin{center}
 		\newcommand{\communicationsight}[2]{\draw[communication] (#1) edge[bend left=10] (#2);}
 		\begin{tikzpicture}[node distance=1cm,inner sep=1pt,minimum size=1pt]
 		\node (B) {B};
 		\node[right of=B] (1) {$q_1$};
     \node[right of=1] (2) {$q_2$};
     \node[right of=2] (3) {$q_3$};
     \node[below of=2] (4) {$q_4$};
     \node[below right of=4] (7) {$q_7$};
     \node[below right of=3] (5) {$q_5$};
     \node[above of=3] (6) {$q_6$};
     \draw[-] (B) -- (1) -- (2) -- (3) (B) -- (4) -- (5) -- (6) -- (B)
       (4) -- (7) -- (5) (7) -- (3)
       (2) -- (6);
     \communicationsight B 1
     \communicationsight B 4
     \draw[communication] (B) -- (5);
     \communicationsight B 6
     \communicationsight 2 6
     \communicationsight 3 5
     \communicationsight 3 4
     \communicationsight 5 6
     \communicationsight 4 5
     \communicationsight 7 4
     \communicationsight 7 3
     \communicationsight 7 5

 		\end{tikzpicture}
 	\end{center}
   \caption{A sight-moveable topological graph where plain edges represent
     movement edges, and the dotted ones represent communication edges.
   \label{figure:sight-moveabletopologicalgraph}}
 \end{figure}

\begin{theorem}
  $\CCBSSO$ is not complete.
  \label{thoerem:SelfOtherInComplete}
\end{theorem}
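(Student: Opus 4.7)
The plan is to exhibit a concrete CMAPF instance on which $\CCBSSO$ terminates without returning a solution, even though a connected execution exists. I would construct a small (non-sight-moveable) topological graph, in the spirit of Figure~\ref{figure:example-cmapf}, together with a choice of $\vConfigS$ and $\vConfigT$ for which the shortest paths returned by \aCSPname at the root create a disconnection conflict $\sConflict{\vTime}{\vAgent}$ whose \emph{only} resolution requires several agents to simultaneously move to vertices that are not communication-neighbors of any agent's current position in the root execution, nor of the base $\sBase$.

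First I would specify the graph, start and goal and exhibit an optimal connected execution $\vExec^*$, verifying with Definition~\ref{definition:connected} that every intermediate configuration is connected. Then I would simulate $\CCBSSO$ from the root: after the conflict $\sConflict{\vTime}{\vAgent}$ is detected, \Self and \Other spawn a finite list of children, each attaching a positive constraint $\sConstraint{\vAgent'}{\vVertex''}{\vTime}{\top}$. By design, in each such constraint the vertex $\vVertex''$ is a communication-neighbor of either $\sBase$ or of some $\vNode.\vExec_{\vAgentb}[\vTime]$, and none of these candidate vertices coincide with the positions used by $\vExec^*$ at time $\vTime$. Hence no root child can realize $\vExec^*$, and \aCSPname either returns an alternative path that re-introduces a conflict of the same shape or reports infeasibility.

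The central step is then to argue that no descendant produces a valid solution either. Since positive constraints are inherited along every branch and act as strict landmarks in the low-level search (Subsection~\ref{sub:algo:csp}), once a ``wrong'' landmark is imposed no future split can undo it. A finite case analysis over the handful of distinct positive constraints that may ever appear on a branch then suffices to show that every branch eventually fails, so \aOPEN empties and $\CCBSSO$ returns no solution, contradicting the existence of $\vExec^*$.

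The main obstacle is taming the descendants: deeper conflicts may occur at different times, new positive constraints may target different vertices, and the branching factor of \Self and \Other is $O(\vAgentsNumber \times |\sVertexSet|)$. The counterexample must therefore be designed so that the communication-neighborhoods of every vertex reachable along any branch form a small, well-understood set disjoint from the positions used by every connected execution. Balancing these two requirements---permitting at least one connected execution $\vExec^*$ while simultaneously blocking every branch generated by positive constraints---is the delicate combinatorial step of the proof.
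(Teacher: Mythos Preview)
Your overall strategy---exhibit a concrete CMAPF instance that has a connected solution but on which $\CCBSSO$ terminates with an empty \aOPEN---is exactly right, and matches the paper. Two points deserve attention, one substantive and one about strength.

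First, the paper's counterexample is engineered so that \emph{every} child of the root dies immediately: each positive constraint $\sConstraint{\vAgent'}{\vVertex''}{\vTime}{\top}$ produced by \Self or \Other is infeasible in the low level, because the landmark vertex~$\vVertex''$ is simply not reachable by agent~$\vAgent'$ within~$\vTime$ movement steps. (Recall from Subsection~\ref{sub:algo:csp} that a positive constraint is a timed landmark, and if it cannot be reached by time~$\vTime$ then \aCSPname fails.) This kills the whole tree after one expansion, so there are no descendants to ``tame'' and no finite case analysis is needed. Your plan, by contrast, allows children whose \aCSPname call succeeds but re-introduces a conflict, and then chases those branches recursively; that is much harder to control and is precisely the obstacle you identify in your last paragraph. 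The cleaner design principle is: place the conflict at a small time~$\vTime$ and arrange the movement graph so that every communication-neighbor of the current positions and of~$\sBase$ is out of $\vTime$-step reach for the agent being constrained.

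Second, you propose a non-sight-moveable graph ``in the spirit of Figure~\ref{figure:example-cmapf}''. The paper instead builds the counterexample on a \emph{sight-moveable} graph (Figure~\ref{figure:sight-moveabletopologicalgraph}). This is a strictly stronger statement, and it is the interesting one here: it shows that $\CCBSSO$ is incomplete even on the restricted class for which the existence problem is in LOGSPACE and for which Theorem~\ref{thoerem:SelfComplete} gives a positive completeness result under the extra assumption that all agents start at~$\sBase$. If you only produce a non-sight-moveable counterexample, you prove the theorem as stated but miss this contrast.

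A minor logical remark: arguing that ``no root child can realize~$\vExec^*$'' is not in itself useful, since the algorithm could still find a \emph{different} connected execution along some branch. What you must show is that every branch fails to produce \emph{any} solution; the paper sidesteps this entirely by making every branch fail at the low level.
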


\begin{proof}
	Consider the sight-moveable topological graph of Figure~\ref{figure:sight-moveabletopologicalgraph}.
	Consider the instance with two agents: $s=(q_4,q_3)$ and~$g=(B,B)$.
	
	This instance has a solution, for instance, the connected execution from configuration $s$ to $g$ made up of the following paths:
	$(q_4,q_5,q_6,B,B)$ for agent $a_1$,  and 
	$(q_3,q_3,q_2,q_1,B)$ for agent $a_2$.

	However, let us show that $\CCBSSO$ does not find a solution.
	The first shortest paths generated by the algorithm are, respectively,
	$(q_4, B)$ and~$(q_3, q_2, q_1, B)$. Thus agent~$a_2$ is disconnected at the second step.
	The constraints added by the \Self strategies are not satisfiable.
	In fact, agent $\vAgent_2$ can only move to $q_2$ and~$q_7$ at step 2, and none
	of them are connected to~$B$ (where agent 1 is at step 2.)
	Furthermore, there is only one constraint of type \Other added by the algorithm,
	and it consists at placing agent $\vAgent_1$ at $q_6$ at step~$2$, so as to 
	connect it to agent~$2$ at $q_2$. However, this is not satisfiable either
	since agent~$\vAgent_1$ cannot move there in one step.
	So the search is stuck and no solution
	is returned.
	
\end{proof}

Interestingly, even if $\CCBSSO$ is not complete in theory, our experiments
promote the use of $\Self$ and $\Other$ alone: $\CCBSSO$ is slightly faster
while outputting optimal plans in almost all cases.

\subsection{Completeness of $\CCBSS$ when all agents start at the base}
On the bright side, we show that the algorithm is complete on sight-moveable graphs if all agents start
at the base~$B$. This is an interesting case since in some applications, agents
(say, drones) are all launched from the base. Thus, on a typical mission,
reaching a given configuration from the base would be the initial task. The
following lemma shows that only the $\Self$ constraints are required to ensure
completeness. The variant~$\CCBSSO$ is complete a fortiori.

\begin{theorem}
	\label{thoerem:SelfComplete}
	$\CCBSS$ is complete on sight-moveable graphs when all agents start at the
	base.
\end{theorem}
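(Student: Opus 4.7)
My plan is to mirror the standard completeness argument for CBS, strengthened to handle the positive constraints produced by the $\Self$ strategy in the sight-moveable setting with a common base-start. I would fix an arbitrary connected execution $\vExec^{*}$ from $\vConfigS=(\sBase,\dots,\sBase)$ to $\vConfigT$, and call a node $\vNode$ of the constraint tree \emph{$\vExec^{*}$-compatible} if $\vExec^{*}$ satisfies every constraint in $\vNode.\vConst$. The root is compatible. The plan is to prove, by induction on the number of iterations, the invariant that $\aOPEN$ contains a compatible node whose cost does not exceed $|\vExec^{*}|$, until a connected execution is returned. Combined with the observation that only finitely many constraints $\langle \vAgent,\vVertex,\vTime,\top\rangle$ with $\vTime\leq |\vExec^{*}|$ can be generated, this bounds the relevant part of the constraint tree and yields termination, hence completeness.

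The crux is the induction step. Let $\vNode$ be a compatible node selected from $\aOPEN$, and suppose the conflict chosen by the high level is $(\vTime,\vAgent)$, with $\vAgent$ disconnected at time $\vTime$ in $\vNode.\vExec$. Compatibility means that $\vExec^{*}$ satisfies $\vNode.\vConst$, so $\vExec^{*}[\vTime]$ is connected and the vertex $\vVertex'' := \vExec^{*}_{\vAgent}[\vTime]$ is in communication either with $\sBase$ or with $\vExec^{*}_{\vAgentb}[\vTime]$ for some other agent $\vAgentb$. I would show that the $\Self$ procedure creates a child bearing the positive constraint $\langle \vAgent,\vVertex'',\vTime,\top\rangle$, and that this child remains $\vExec^{*}$-compatible. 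Sight-moveability is used when invoking the low level: the $\aCSPname$ call for $\vAgent$ must return a path visiting the landmark $\vVertex''$ at time $\vTime$ while respecting the previous constraints. A prefix of $\vExec^{*}_{\vAgent}$ already witnesses feasibility up to $\vVertex''$, and sight-moveability lets us splice it with movement paths that realise any intermediate communication requirement.

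The main obstacle will be the alignment of positions: $\Self$ enumerates constraints based on the positions $\vNode.\vExec_{\vAgentb}[\vTime]$ of the \emph{other} agents in the current node, whereas $\vVertex''$ only needs to communicate with $\vExec^{*}_{\vAgentb}[\vTime]$. To bridge the gap, I would maintain the stronger invariant that $\vExec^{*}$ can be chosen so that, at the conflict time $\vTime$ of each expansion, the positions of all agents other than $\vAgent$ in $\vNode.\vExec$ coincide with those of $\vExec^{*}$. Here the base-start assumption is essential: at time $0$ both executions are $(\sBase,\dots,\sBase)$, and sight-moveability allows us to locally reroute $\vExec^{*}$ through intermediate vertices while preserving connectivity, so the two executions remain in step on the unconstrained agents at every future conflict. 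I expect the bulk of the technical work to live in justifying these reroutings, and this is the one point where the hypothesis that all agents start at $\sBase$ is genuinely used.
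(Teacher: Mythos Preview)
Your proposal correctly isolates the central difficulty---the ``alignment'' between the other agents' positions in the reference execution $\vExec^{*}$ and in the current node $\vNode.\vExec$---but the patch you propose does not close it. The stronger invariant you suggest, that $\vExec^{*}$ can be rerouted so that the positions of all agents other than $\vAgent$ in $\vNode.\vExec$ \emph{coincide} with those of $\vExec^{*}$ at the conflict time, is not something you can enforce: the paths in $\vNode.\vExec$ are computed by the low level as \emph{arbitrary} shortest paths subject to the accumulated constraints, and nothing forces them to agree pointwise with any chosen $\vExec^{*}$. Rerouting $\vExec^{*}$ to chase these positions would have to be redone at every expansion while still satisfying every previously added positive constraint on every agent, and you give no argument that such a rerouting exists---sight-moveability and the base-start hypothesis do not obviously provide one.

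The paper's proof sidesteps the alignment problem altogether by fixing $\vExec$ once and for all as a specific canonical execution, built from a $\sCommRel$-spanning tree $T_g$ of the goal configuration rooted at $\sBase$: agents travel in groups along sight-moveable paths $\rho_{\vConfigT_i,\vConfigT_j}$, each agent following its tree-parent until the parent reaches its goal and idles there. The invariant carried along the compatible branch is not pointwise coincidence but the much weaker (and provable) statement that for every agent $\vAgentb$ and every $t\geq |\vExec_\vAgentb|$ one has $\vNode.\vExec_\vAgentb[t]=g_\vAgentb$; this holds because $\vExec_\vAgentb$ itself satisfies the constraints, so the low level returns something no longer. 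Then at a conflict $(\vTime,\vAgent)$, the vertex $v:=\vExec_\vAgent[\vTime]$ lies on some segment $\rho_{\vConfigT_{\alpha_j},\vConfigT_{\alpha_{j+1}}}$ of the canonical path; the ancestor agent $\alpha_j$ has $|\vExec_{\alpha_j}|\leq \vTime$, hence sits at $\vConfigT_{\alpha_j}$ in $\vNode.\vExec$ as well, and $v$ is $\sCommRel$-adjacent to $\vConfigT_{\alpha_j}$ by sight-moveability. Thus $\Self$ does enumerate the constraint $\langle \vAgent, v, \vTime, \top\rangle$. The missing idea in your approach is precisely this: replace the arbitrary $\vExec^{*}$ and the pointwise-coincidence invariant by the tree-based canonical execution together with the length-based invariant ``agents are at their goals in $\vNode.\vExec$ after time $|\vExec_\vAgentb|$''.
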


\begin{proof}
	Consider a sight-moveable topological graph~$G$, and a connected
	configuration~$g = (g_1,\ldots,g_\vAgentsNumber)$. 
	We are going to construct a
	particular execution from~$B^\vAgentsNumber$ to~$\vConfigT$, in the same way
	as in Prop. 19 in~\cite{Charrier:19:RCPCA}. Let us order the
	nodes~$B,\vConfigT_1,\ldots,\vConfigT_\vAgentsNumber$ into
	$\vConfigT_{i_0},\vConfigT_{i_1},\vConfigT_{i_2},\ldots,
	\vConfigT_{i_\vAgentsNumber}$ with~$\vConfigT_{i_0} = B$, such that for
	all~$1\leq j\leq \vAgentsNumber$, $\vConfigT_{i_j}$ is connected to
	some~$\vConfigT_{i_k}$ with $0 \leq k < j$. This order induces a tree, and can
	be obtained by breath-first search in a $\coms$-spanning tree of $V'$ at root
	$B$; see Figure~\ref{figure:sm} for an example. Let us fix such a spanning
	tree~$T_{\vConfigT}$.
	
	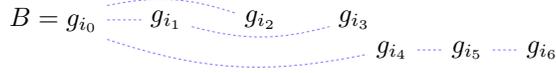
\begin{figure}
		\begin{center}
			\begin{tikzpicture}[yscale=1]
			\node (B) {$B = \vConfigT_{i_0}$};
			\node (ci1) at (1.5, 0) {$\vConfigT_{i_1}$};
			\node (ci2) at (2.75, 0) {$\vConfigT_{i_2}$};
			\node (ci3) at (4, 0) {$\vConfigT_{i_3}$};
			\node (ci4) at (4.5, -0.4) {$\vConfigT_{i_4}$};
			\node (ci5) at (5.5, -0.4) {$\vConfigT_{i_5}$};
			\node (ci6) at (6.5, -0.4) {$\vConfigT_{i_6}$};
			\draw[communication] (B) -- (ci1);
			\draw[communication] (B) edge[bend left=15] (ci2);
			\draw[communication] (ci1) edge[bend right=15] (ci3);
			\draw[communication] (B) edge[bend right=15] (ci4);
			\draw[communication] (ci4) -- (ci5);
			\draw[communication] (ci5) -- (ci6);
			\end{tikzpicture}  	\end{center}
		\caption{Example of an ordering of nodes in $V' = \set{\vConfigT_1, \dots, \vConfigT_n, B}$.}
		\label{figure:sm}
	\end{figure}
	
	By the sight-moveable property, for any pair~$\vConfigT_i,\vConfigT_j$
	connected in this tree, there is a path $\rho_{\vConfigT_i,\vConfigT_j}$ along
	which an agent can move from~$\vConfigT_i$ to~$\vConfigT_j$ while staying
	connected with~$\vConfigT_i$ along the way. Informally, the executions we
	define have the following property. Any agent who arrives at their destination
	stays there.   Furthermore, for any agent~$a$, if~$a'$ denotes the agent such
	that $\vConfigT_{a'}$ is the parent of~$\vConfigT_a$ in $T_{\vConfigT}$,
	then, agent~$a$ strictly shares the same position as~$a'$ until $a'$ reaches
	their target. Agents move in groups following
	paths~$\rho_{\vConfigT_i,\vConfigT_j}$ to move from one target configuration
	to another. For instance, in Fig.~\ref{figure:sm}, the executions for
	agents~$i_4,i_5,i_6$ consist in them moving all together to~$\vConfigT_{i_4}$
	(via $\rho_{\vConfigT_{i_0},\vConfigT_{i_4}}$) then $i_5,i_6$ moving together
	to~$\vConfigT_{i_5}$ (via $\rho_{\vConfigT_{i_4},\vConfigT_{i_5}}$), and~$i_6$
	to moving alone to~$\vConfigT_{i_6}$ (via
	$\rho_{\vConfigT_{i_5},\vConfigT_{i_6}}$).  The sight-moveable (SM) property ensures that the
	agents are always connected along this execution. We are going to define a set
	of executions based on  this particular execution.
	
	Let~$\rho_i$ denote the path $\rho_{\vConfigT_{\alpha_0},
		\vConfigT_{\alpha_1}} \rho_{\vConfigT_{\alpha_1}, \vConfigT_{\alpha_2}} \ldots
	\rho_{\vConfigT_{\alpha_{k-1}}, \vConfigT_{\alpha_k}}$ with~$\alpha_0 = 0,
	\alpha_k=i$, and $\vConfigT_{\alpha_1},\ldots,\vConfigT_{\alpha_k}$ is the
	path from the root to node~$\vConfigT_i$ in $T_{\vConfigT}$. Intuitively, this
	is the path that is taken by agent~$i$.
	
	Let~$\vExec$ denote the execution constructed above. Observe that all
	configurations of $\vExec$ are connected by the SM property. In fact, all
	nodes visited along $\rho_{\vConfigT_{\alpha_0,\alpha_1}}$ are directly
	connected to the base since~$\alpha_0=0$ and~$\vConfigT_0$ is the base. By
	construction, an agent moves along the
	path~$\rho_{\vConfigT_{\alpha_i,\alpha_{i+1}}}$ only when the agent whose
	target is $\vConfigT_{\alpha_i}$ has reached this node and remains there. By the
	SM property, all nodes of this path are connected with~$\vConfigT_{\alpha_i}$.
	This argument, applied inductively, shows that all reached configurations are
	connected~(see \cite{Charrier:19:RCPCA}).
	
	We are going to show that in the CT created by $\CCBSS$, there is a branch~$b$
	such that along all its nodes~$\vNode$, either there is a solution or the
	following invariant holds:
	\begin{itemize}
		\item $\vExec$ satisfies $\vNode.\vConst$,
		\item for all agents~$\vAgent$, $\forall t \geq |\vExec_\vAgent|, \vNode.\vExec_\vAgent[t] = g_\vAgent$.
	\end{itemize}
	That is, $\vExec$ is compatible with current constraints, moreover, once an
	agent reaches their goal node, they remain there.
	
	The invariant holds at the initial node since there are no constraints, and
	since the algorithm produces a shortest path for each agent separately, which
	are shorter or equal in length than the paths prescribed by~$\vExec$.
	
	Assume that the property holds at some node $\vNode$. If there is no conflict,
	then the algorithm has found a solution. Otherwise, consider any conflict picked
	by the algorithm, say, $(\vTime,\vAgent)$ (i.e. agent~$\vAgent$ is disconnected
	at step~$\vTime$). We argue that $\aSelf{\vNode}{t}{a}$ creates a child node
	with constraint $\sConstraint \vAgent v \vTime \top$ where~$v$ is the
	$\vTime$-th vertex of~$\rho_\vAgent$.
	For each~$1\leq k \leq \vAgentsNumber$,
	let~$\vConfigT_{\alpha_0},\vConfigT_{\alpha_1},\ldots,\vConfigT_{\alpha_k}$ be
	the path from the root to $\alpha_k$ in~$T_{g}$. If~$t \leq
	|\rho_{\vConfigT_{\alpha_0},\vConfigT_{\alpha_1}}|$, then $v$ is connected
	to the base, so the above constraint will be added. Notice that~$\vExec$ is
	compatible with the newly added constraint; furthermore, when the algorithm
	recomputes a shortest path for agent~$\vAgent$, it must find one whose length is
	not more than that of~$\vExec_\vAgent$ since the latter is a candidate path
	satisfying the constraints. The invariant thus holds in the child node.
	
	Otherwise, let~$j$ denote the largest index such that
	\[
	|\rho_{\vConfigT_{\alpha_0},\vConfigT_{\alpha_1}}\rho_{\vConfigT_{\alpha_1},\vConfigT_{\alpha_2}} \ldots \rho_{\vConfigT_{\alpha_j-1},\vConfigT_{\alpha_j}}|
	\leq t.
	\]
	By the invariant, agent~$\alpha_j$ is at vertex~$\vConfigT_{\alpha_j}$ at
	time~$t$. But~$\vConfigT_{\alpha_j}$ is connected to~$v$ since~$v$ is on the
	path $\rho_{\vConfigT_{\alpha_j},\vConfigT_{\alpha_{j+1}}}$. Thus, some child
	node of~$\vNode$ will be created with the above constraint. The
	execution~$\vExec$ satisfies the newly added constraint by definition.
	Furthermore, $\vExec_\vAgent$ is a path from the base to~$\vConfigT_{\vAgent}$ so the
	newly computed shortest path for~$\vAgent$ must be of length at
	most~$|\vExec_\vAgent|$, which shows the second part of the invariant.
\end{proof}


\section{Experimental Results}
\label{sec:exp}
In this section, we evaluate optimal algorithms $\CCBSNSO$, $\CCBSN$, and $\CCBSSO$ that is incomplete.

\subsection{Benchmarks}

\begin{figure*}[t]
    \newcommand{\mapheight}{15mm}
    \newcommand{\subfiguremapheight}{.11}
    \newcommand{\ypositionradius}{-0.9}
    \newcommand{\benchmark}[7]{
        \begin{tabular}{c}
            #1 \\
            \begin{tikzpicture}
                \node {	\includegraphics[height=\mapheight]{figures/#2.png} };
                \node at (-0.3, \ypositionradius) {\tiny range: };
                \node[draw, fill=black, circle, inner sep=0.2mm] at (0.1, \ypositionradius) {};
                \draw[->, blue] (0.1, \ypositionradius) -- (0.1+#7*1.5, \ypositionradius);
            \end{tikzpicture}
        \end{tabular}\hspace{-2mm}
        \raisebox{0cm}{
            \scriptsize
            \begin{tabular}{ll}
                \textbf{\#nodes} & #3 \\\hline
                \textbf{\#mvt edges} \hspace{-2mm} & #4 \\\hline
                \multicolumn{2}{l}{\textbf{\#comm edges}} \\
                ~~ distance: & #5 \\
                ~~ LOS: & #6 \\
            \end{tabular}
        }
	}
    \benchmark{Coast}{coast}{2174}{8260}{181070}{96565}{0.25}\hfill
    \benchmark{Maze}{maze}{666}{2318}{8736}{6422}{1/6}\hfill

    \benchmark{Offices}{offices}{1494}{2419}{29962}{25660}{0.09}\hfill
    \benchmark{Open}{open}{2205}{4107}{39310}{39299}{0.08}
    \caption{Benchmarks. The four maps used to obtain topological graphs. Obstacles are in black. For each map, we generate two topological graphs that share the same nodes and movement edges, but the communication edges correspond either to a distance-based communication (the range we used are depicted below the maps) or a LOS-based communication.}
    \label{fig:maps}
\end{figure*}

The experiments were carried out on 4 different benchmark maps depicted in
Figure~\ref{fig:maps}. Both \emph{Coast} and \emph{Maze} come from the Moving AI
Lab benchmarks\footnote{\url{https://movingai.com/benchmarks/mapf/index.html}
(\url{w_woundedcoast.map} and \url{maze-32-32-2.map})}. \emph{Coast} is
extracted from Dragon Age 2, in the same spirit as in
\cite{DBLP:journals/tciaig/Sturtevant12}. Both \emph{Offices} and \emph{Open}
maps were used for experimental analysis in \cite{Hollinger:12:MCPC} and
\cite{Tateo:18:MCPP}. \emph{Offices} is a map of the SDR offices from the Radish
data set \cite{Radish}. \emph{Open} is a map of the McKenna MOUT site.

We discretized the maps as follows. The movement edges follow an 8-way grid
(i.e. the agents can move in the 8 directions). Concerning communications, we
adopted two practical settings, the distance-based and the line-of-sight-based (LOS)
ones. Both are standard; see \textit{e.g.} \cite{abb-is17}. We thus obtain $4 \times 2$ topological graphs.
\begin{itemize}
    \item In the \emph{distance-based} communication,  an agent communicates
         with another one up to a certain maximal distance, called the range
         (e.g. as in Wi-Fi); the range is displayed below the maps in
         Figure~\ref{fig:maps};
    \item In the \emph{LOS}-based communication, an agent communicates with
        agents that are in line of sight, (do not pass through obstacles).
\end{itemize}

Note that both communication models are used in applications. LOS is particularly
interesting to obtain a conservative model without false negatives, that is,
assumed communication links are likely to exist in the real-world. In contrast, 
in the distance-based communication model, some obstacles can prevent communication
between two locations or lower its quality. See the discussion in~\cite{abb-is17}.

\subsubsection{Methodology}

The algorithms were implemented single-threaded in Java. The experiments were
done sequentially and the time of initialization of the algorithms (e.g. parsing
of the graph, etc) was not counted in its execution time. These experiments were
done on an Intel Xeon W-2104 CPU at 3.20GHz with 16 GB of memory.

We compare the three main algorithms ($\CCBSSO$, $\CCBSNSO$, $\CCBSN$) and A*
algorithm~\cite{Hart:68:ASTAR} with the operator decomposition (OD)
optimization~\cite{Standley:10:FOS},
which consists in moving a single agent per step.
We run the algorithms with 2 to 10 agents
and then from 10 to 50 agents (10, 15, \dots, 45, 50), on the 8 topological
graphs obtained from the 4 maps listed above with either a \emph{distance-based}
or \emph{LOS-based} communication on 100 instances with a time out of 30 seconds
to solve 10 instances.

The success rate of an algorithm on these benchmarks is defined as the percentage
of instances for which it found an execution in the allocated time.
These executions are optimal for $\CCBS$ and~$\CCBSN$ but not necessarily
optimal for~$\CCBSSO$.

In Figure~\ref{fig:radius-sucess}, we report the success rates of $\CCBSN$, 
$\CCBSNSO$, $\CCBSSO$ and the A$^*$ algorithm on the 4 maps with a \emph{distance-based} communication. In
Figure~\ref{fig:sight-sucess}, we report the same experiments with the
\emph{LOS-based} communication.

\subsection{Results}

\newcommand{\mycustomlegend}[1]{
    \tikz[baseline=-1mm,xscale=0.5]{
        \draw[line width=0.8mm, #1] (0, 0) -- (1, 0);
 }
}
\newcommand{\CCBSNlegend}{\mycustomlegend{red, dash pattern={on 2pt off 2pt}}}
\newcommand{\CCBSNSOlegend}{\mycustomlegend{cyan!30!gray}}
\newcommand{\CCBSSOlegend}{
    \mycustomlegend{green!70!black,dash pattern={on 2pt off 2pt on 4pt off 2pt}}
}
\newcommand{\DFSlegend}{
    \mycustomlegend{black!70!black,dash pattern={on 4pt off 4pt}}
}
\newcommand\x{0.6}
\newcommand{\displaysuccessratediagrams}[1]{
	\begin{tikzpicture}
	\node at (0, 0) {\includegraphics[width=\x\textwidth]{figures/coast_#1.pdf}};
	\node at (\x\textwidth, 0) {\includegraphics[width=\x\textwidth]{figures/maze_#1.pdf}};
	\node at (0, -3.3) {\includegraphics[width=\x\textwidth]{figures/offices_#1.pdf}};
	\node at (\x\textwidth, -3.3) {\includegraphics[width=\x\textwidth]{figures/open_#1.pdf}};
	\node at (1.2, 1) {coast};
	\node at (34+\x\textwidth, 1) {maze};
	\node at (1.2, -3.3+1.05) {offices};
	\node at (34+\x\textwidth, -3.3+1) {open};
	\node at (0, -5) {\scriptsize Number of agents};
	\node at (\x\textwidth, -5) {\scriptsize Number of agents};
	\node[rotate=90] at (-2.2,0) {\scriptsize Success rate};
	\node[rotate=90] at (-2.2,-3.3) {\scriptsize Success rate};
    \end{tikzpicture}
}
\begin{figure*}
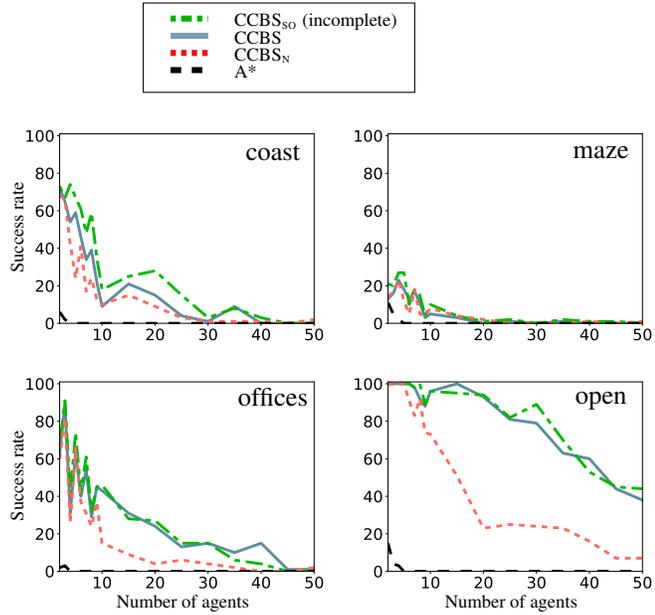
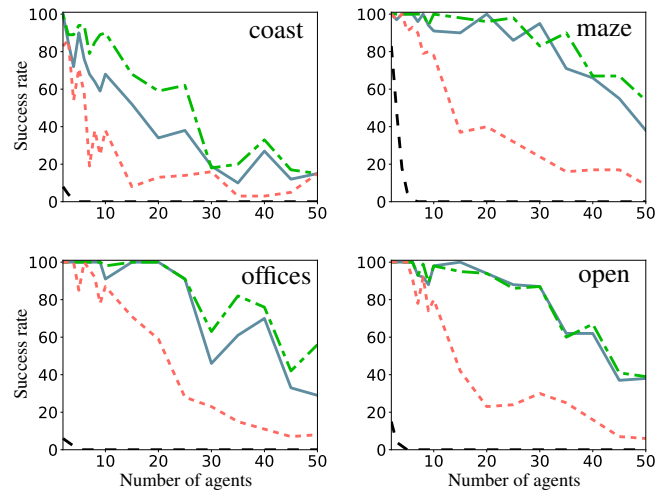

    \centering
    \fbox{
    	\scriptsize
    	\begin{tabular}{ll}
    		\CCBSSOlegend & $\CCBSSO$ (incomplete) \\[-0.5mm]
    		\CCBSNSOlegend & $\CCBSNSO$ \\[-0.5mm]
    		\CCBSNlegend & $\CCBSN$ \\[-0.5mm]
    		\DFSlegend & A*
    \end{tabular}}

    	\centering
    \begin{subfigure}{\x\textwidth}
		\displaysuccessratediagrams{no_sight}
        \caption{Distance-based communication.}
        \label{fig:radius-sucess}
    \end{subfigure}
    \hfill
    
    \centering
    \begin{subfigure}{\x\textwidth}
        \displaysuccessratediagrams{sight}
        \caption{LOS-based communication.}
        \label{fig:sight-sucess}
    \end{subfigure}
    %
        
     %

    \caption{Success Rate on the benchmark Coast, Maze, Offices and Open with
    the two definitions for communication edges. $\CCBSNSO$ generally
    outperforms $\CCBSN$. $\CCBSSO$ generally slightly outperforms $\CCBSNSO$.
    Our algorithms outperform A*.
    }
    \label{fig:total-sucess}
\end{figure*}

\subsubsection{A$^*$ Algorithm}

Our optimal algorithms $\CCBS$ and $\CCBSN$ outperform A$^*$ by an order of magnitude on all maps. We believe
that the main reason for the inefficiency of~A$^*$ is the branching factor which is at most 9 for moving a
single agent (8 directions and idle), but $9^n$ for a joint move of $n$ agents.
Notice that the performance of A$^*$ was slightly better in Maze maps with LOS
communication which has a smaller branching factor. Another reason could be that
the makespan objective prevents A$^*$ from distinguishing better executions. In
fact, a long path taken by a single agent can shadow improvements in the paths
of other agents. The A$^*$ algorithm is our implementation of the DFS-based
algorithm of~\cite{Tateo:18:MCPP}.



\subsubsection{$\CCBSNSObold$ v.s. $\CCBSNbold$}

One can observe, first, that $\CCBSNSO$ outperforms $\CCBSN$ in all benchmarks.
Indeed, the addition of the \Self and \Other strategies allows $\CCBSNSO$ to
gain $16\%$ of success rate in average. Both algorithms have a similar behavior
on some maps as the Maze and Coast with \emph{distance-based} communication.
However, on the map Open, $\CCBSNSO$ performs better.

A second observation is the difference in success rates depending on the type
of communication. In our experiments, the algorithms performed generally better
when communication was LOS-based rather than distance-based.
In particular, on the Maze map, almost no
execution were generated past $20$ agents with the \emph{distance-based}
communication, while with the \emph{LOS-based} communication, $\CCBSNSO$ is
still above $40\%$ of success with $50$ agents.

\subsubsection{$\CCBSNSObold$ v.s. $\CCBSSObold$}

The sub-optimal $\CCBSSO$ is $4\%$ better in
total average of success rate than $\CCBSNSO$. Interestingly, over 13600 executions, $\CCBSNSO$ and $\CCBSSO$
compute the same result except for 4 cases. For our benchmarks, the non-optimal
algorithm $\CCBSSO$ found optimal executions in $99.97\%$ cases.
This indicates that despite its incompleteness (Theorem~\ref{thoerem:SelfOtherInComplete}),
$\CCBSSO$ seems suitable in practice.

\subsubsection{Size of the constraint tree}
The strategies $\Self,\Other$ used in $\CCBS$ lead to a larger
branching factor compared to~$\CCBSN$. In fact, rather than selecting a disconnected agent and forbidding
its current location, these strategies select a disconnected agent and a candidate vertex. Figure~\ref{fig:tree} shows a comparison of the number of nodes of the
constraint
trees generated by both algorithms. Despite the large branching factor, $\CCBS$ often quickly
finds a solution which means that the depth of the conflict tree is small; while $\CCBSN$
generates significantly larger trees.

\begin{figure}[ht]
    \centering
    \begin{tikzpicture}
        \node at (0, 0) {\includegraphics[width=0.6\textwidth]{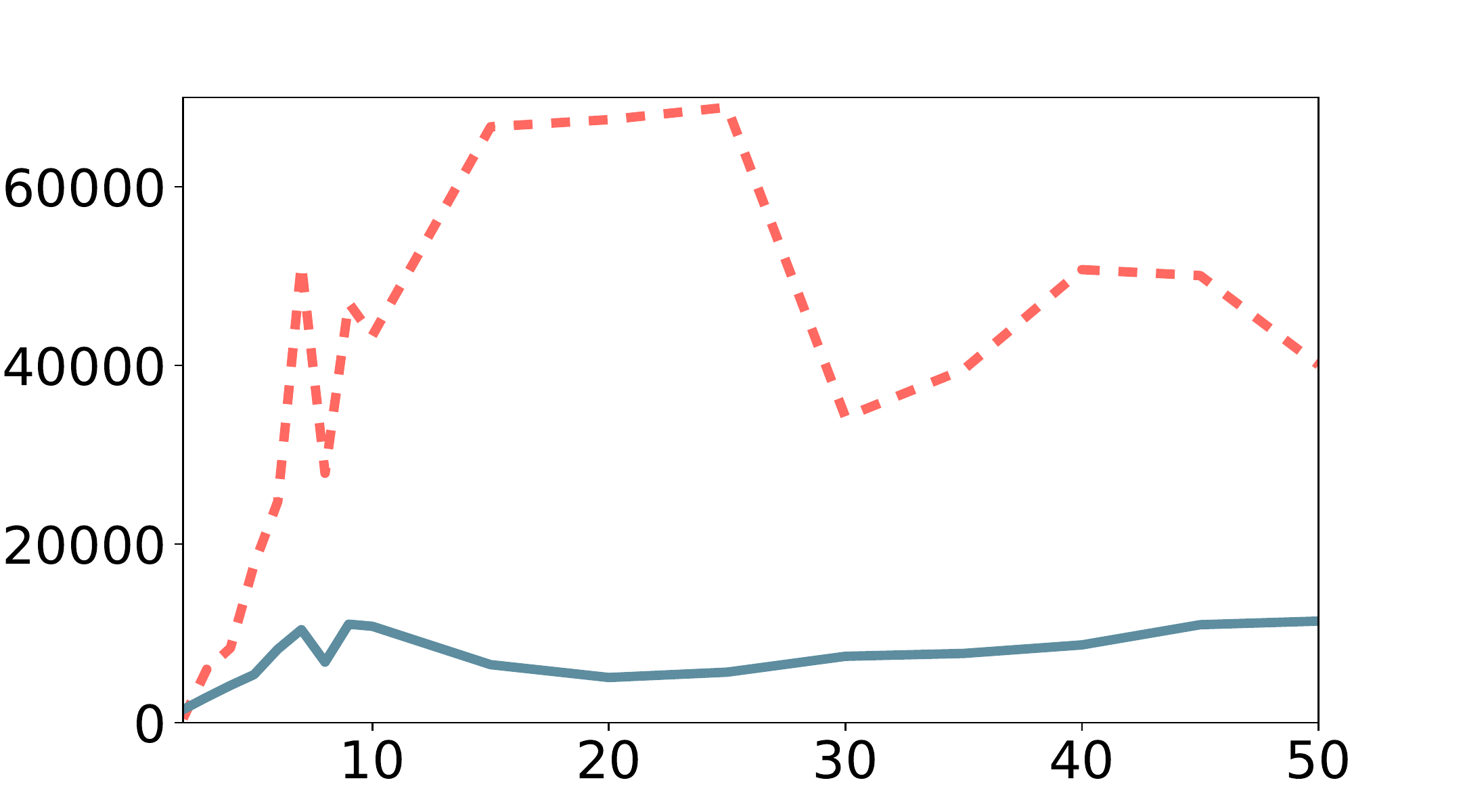}};
	    \node at (0, -2) {\scriptsize Number of agents};
	    \node[rotate=90] at (-3.9,0) {\scriptsize Number of generated nodes};
    \end{tikzpicture}
    \scriptsize
    \begin{tabular}{ll}
        \CCBSNSOlegend & $\CCBSNSO$ \\[-0.5mm]
        \CCBSNlegend & $\CCBSN$
    \end{tabular}
    \caption{Average number of generated nodes for successful executions. The
    number of nodes generated by $\CCBSNSO$ is lower by an order of magnitude
    compared to $\CCBSN$.}
    \label{fig:tree}
\end{figure}

\subsection{Partial and Selective Splitting}

Now, we discuss an optimization used to lower the memory usage of
the algorithm $\CCBS$.

In a variant of the A* algorithm from \cite{Yoshizumi:2000:PEL} and its enhanced
version~\cite{Felner:12:PSN}, the authors introduce an optimization which
consists in partially generating the children of nodes based on their costs.
When a node $\vNode$ is chosen for splitting, one can partially split it (that
is, generate only some of its child nodes) and put the node back in OPEN so that
it is split again later (to generate the rest of its child nodes).

In our setting, if the child node \Neg generated for agent~$\vAgent$ has an
execution strictly longer than that of the parent node, then we know that the
executions of agent~$\vAgent$ at all \Self and \Other nodes that constrain
agent~$\vAgent$ are longer as well. We thus suggest the following optimization.
Given a conflict at time $t$, if the path of an agent $\vAgent$ with the new
constraint $\langle \vAgent, {\vExec_\vAgent[t]}, \vTime, \bot \rangle$ is
longer than its previous path then we partially split the node by omitting the
generation of all child nodes in which agent~$\vAgent$ is constrained. The
current node is put back in $\aOPEN$ by incrementing its cost.

This is particularly useful in our case given the large amount of \Self and
\Other constraints created. In our experiments, this partial and selective
splitting optimization reduced the number of created nodes by $9\%$ in average
over all benchmarks with slightly better success rate (the average success rate
is $59\%$ with optimization and $57\%$ without).


\section{Conclusion}
\label{sec:concl}
We presented the optimal algorithm Connected-Conflict-based Search (\CCBS)
(Algorithm~\ref{alg:CBS}) for solving the Connected Multi-Agent Path Finding
problem (Definition~\ref{def:pb:reach}). We then investigated the impact of the
strategies \Self and \Other. Omitting these yields $\CCBSN$ that uses only $\Neg$,
which is still optimal but has worse performance, although it does outperform A*.
Using $\Self$ and $\Other$ but not $\Neg$ yields $\CCBSSO$  that 
is incomplete (Theorem~\ref{thoerem:SelfOtherInComplete})
but produced optimal results in most cases in our experiments,
and had a better success rate than $\CCBSNSO$.


Note that \Self and \Other strategies have tendency to
increase the branching factor. Fortunately, these constraints also significantly
improved the execution time in practice. In contrast, in some work, e.g. the
bypass optimization~\cite{Boyarski:15:CBSBP} for CBS or the operator
decomposition for A*~\cite{Standley:10:FOS}, the authors aim at reducing the
branching factor to improve performance. This is why, while our specific strategies
adding positive constraints might not be fitted for the original MAPF
problem, finding new types of constraints might improve the
performance both for CBS and CCBS.
As shown in Figure~\ref{fig:tree}, strategies \Self and \Other reduce the number of generated nodes.

Interestingly, our Algorithm~\ref{alg:CBS} can be easily tuned to handle \emph{both} collision and communication conflicts. 
For this, Line~7 needs to be modified to detect the
two different types of conflicts. If the conflict is a collision, apply
the original strategy of CBS and jump to Line~11; otherwise, pursue from
Line~8. Our algorithm can also be adapted for other definitions of execution costs, such
as the sum of the path lengths. The experimental study of those extensions is
left for future work. 

Several optimizations for CBS are worth to be extended for CCBS:
\begin{inparaenum}[i)]
	\item grouping agents as meta-agents (MA-CBS \cite{Sharon:12:MACBS}),
	\item prioritizing conflicts (ICBS \cite{Boyarski:15:ICBS}),
	\item and adding an heuristic to the search (CBS-h \cite{Felner:18:CBSh}).
\end{inparaenum}

\newpage
\bibliographystyle{alpha}
\bibliography{main}

\newcommand{\etalchar}[1]{$^{#1}$}
\begin{thebibliography}{TBR{\etalchar{+}}18}

\bibitem[ABB17]{abb-is17}
F.~{Amigoni}, J.~{Banfi}, and N.~{Basilico}.
\newblock Multirobot exploration of communication-restricted environments: A
  survey.
\newblock {\em IEEE Intelligent Systems}, 32(6):48--57, November 2017.

\bibitem[BFS{\etalchar{+}}15]{Boyarski:15:ICBS}
Eli Boyarski, Ariel Felner, Roni Stern, Guni Sharon, David Tolpin, Oded
  Betzalel, and Solomon~Eyal Shimony.
\newblock {ICBS:} improved conflict-based search algorithm for multi-agent
  pathfinding.
\newblock In {\em {IJCAI} 2015}, pages 740--746, 2015.

\bibitem[BFSS15]{Boyarski:15:CBSBP}
Eli Boyarski, Ariel Felner, Guni Sharon, and Roni Stern.
\newblock Don't split, try to work it out: Bypassing conflicts in multi-agent
  pathfinding.
\newblock In {\em {ICAPS} 2015}, pages 47--51, 2015.

\bibitem[CQSS19]{Charrier:19:RCPCA}
Tristan Charrier, Arthur Queffelec, Ocan Sankur, and Fran{\c{c}}ois
  Schwarzentruber.
\newblock Reachability and coverage planning for connected agents.
\newblock In {\em {IJCAI} 2019}, pages 144--150, 2019.

\bibitem[FGS{\etalchar{+}}12]{Felner:12:PSN}
Ariel Felner, Meir Goldenberg, Guni Sharon, Roni Stern, Tal Beja, Nathan
  Sturtevant, Jonathan Schaeffer, and Robert~C. Holte.
\newblock Partial-expansion {A}* with selective node generation.
\newblock In {\em {AAAI} 2012}, pages 471--477, 2012.

\bibitem[FLB{\etalchar{+}}18]{Felner:18:CBSh}
Ariel Felner, Jiaoyang Li, Eli Boyarski, Hang Ma, Liron Cohen, T.~K.~Satish
  Kumar, and Sven Koenig.
\newblock Adding heuristics to conflict-based search for multi-agent path
  finding.
\newblock In {\em {ICAPS} 2018}, pages 83--87, 2018.

\bibitem[HNR68]{Hart:68:ASTAR}
Peter~E Hart, Nils~J Nilsson, and Bertram Raphael.
\newblock A formal basis for the heuristic determination of minimum cost paths.
\newblock {\em IEEE Transactions on Systems Science and Cybernetics,}, pages
  100--107, 1968.

\bibitem[HR03]{Radish}
Andrew Howard and Nicholas Roy.
\newblock The robotics data set repository (radish), 2003.

\bibitem[HS12]{Hollinger:12:MCPC}
G.~A. Hollinger and S.~Singh.
\newblock Multirobot coordination with periodic connectivity: Theory and
  experiments.
\newblock {\em IEEE Transactions on Robotics,}, pages 967--973, Aug 2012.

\bibitem[LHS{\etalchar{+}}19]{Li:19:DS}
Jiaoyang Li, Daniel Harabor, Peter~J. Stuckey, Ariel Felner, Hang Ma, and Sven
  Keonig.
\newblock Disjoint splitting for conflict-based search for multi-agent path
  finding.
\newblock In {\em {ICAPS} 2019}, pages 279--283, 2019.

\bibitem[Sil05]{Silver:05:CP}
David Silver.
\newblock Cooperative pathfinding.
\newblock In {\em {AIIDE} 2005}, pages 117--122, 2005.

\bibitem[SSFS12a]{Sharon:12:CBS}
Guni Sharon, Roni Stern, Ariel Felner, and Nathan Sturtevant.
\newblock Conflict-based search for optimal multi-agent path finding.
\newblock In {\em {AAAI} 2012}, pages 563--569, 2012.

\bibitem[SSFS12b]{Sharon:12:MACBS}
Guni Sharon, Roni Stern, Ariel Felner, and Nathan~R. Sturtevant.
\newblock Meta-agent conflict-based search for optimal multi-agent path
  finding.
\newblock In {\em {SOCS} 2012}, pages 563--569, 2012.

\bibitem[Sta10]{Standley:10:FOS}
Trevor Standley.
\newblock Finding optimal solutions to cooperative pathfinding problems.
\newblock In {\em {AAAI} 2010}, pages 173--178, 2010.

\bibitem[Stu12]{DBLP:journals/tciaig/Sturtevant12}
Nathan~R. Sturtevant.
\newblock Benchmarks for grid-based pathfinding.
\newblock {\em {IEEE} Trans. Comput. Intellig. and {AI} in Games},
  4(2):144--148, 2012.

\bibitem[TBR{\etalchar{+}}18]{Tateo:18:MCPP}
D.~Tateo, J.~Banfi, A.~Riva, F.~Amigoni, and A.~Bonarini.
\newblock Multiagent connected path planning: {PSPACE}-completeness and how to
  deal with it.
\newblock In {\em {AAAI} 20018,}, pages 4735--4742, 2018.

\bibitem[YMI00]{Yoshizumi:2000:PEL}
Takayuki Yoshizumi, Teruhisa Miura, and Toru Ishida.
\newblock A* with partial expansion for large branching factor problems.
\newblock In {\em {AAAI} 2000}, pages 923--929, 2000.

\end{thebibliography}


\end{document}